\newcolumntype{Y}{>{\centering\arraybackslash}X}
\newtheorem{defn}{Definition}
\newtheorem{rem}[defn]{Remark}
\newtheorem{lem}[defn]{Lemma}
\newtheorem{thm}[defn]{Theorem}
\newtheorem{cor}[defn]{Corollary}
\newtheorem{problem}[defn]{Problem}
\providecommand{\R}{\ensuremath \mathbb{R}}
\providecommand{\N}{\ensuremath \mathbb{N}}
\newcommand{\unitcircle}{\regtext{SO}(1)}
\newcommand{\regtext}[1]{\mathrm{\textnormal{#1}}}
\newcommand{\tss}[1]{\textsuperscript{#1}}
\newcommand{\lbl}[1]{_{\regtext{#1}}}
\newcommand{\vc}[1]{#1}
\newcommand{\set}[1]{\mathcal{#1}}
\newcommand{\setfn}[1]{\mathscr{#1}}
\newcommand{\eye}{\vc{I}}
\newcommand{\rotmat}{\vc{R}}
\newcommand{\rotaxis}{\vc{w}}
\newcommand{\transvec}{\vc{t}}
\newcommand{\union}{\bigcup}
\newcommand{\given}{\mid}
\newcommand{\supercross}{^\times}
\newcommand{\linesegment}[1]{\overline{#1}}
\DeclareMathOperator{\boxset}{box}
\newcommand{\rotset}{\set{R}}
\newcommand{\transset}{\set{T}}
\newcommand{\dataset}{\set{D}}
\newcommand{\link}{\set{L}}
\newcommand{\final}{\lbl{f}}
\newcommand{\timestep}{t}
\newcommand{\horizon}{T}
\newcommand{\planninghorizon}{{\horizon_{\plan}}}
\newcommand{\backuphorizon}{{\horizon_{b}}}
\newcommand{\actionhorizon}{{\horizon_{\action}}}
\newcommand{\state}{\vc{x}}
\newcommand{\action}{\vc{a}}
\newcommand{\observation}{\vc{o}}
\newcommand{\plan}{\vc{p}}
\newcommand{\param}{\vc{k}}
\newcommand{\config}{\vc{q}}
\newcommand{\ucpoint}{\vc{p}}
\newcommand{\jointangle}{\theta}
\newcommand{\jointset}{\Theta}
\newcommand{\statespace}{\set{X}}
\newcommand{\workspace}{\set{W}}
\newcommand{\actionspace}{\set{A}}
\newcommand{\configspace}{\set{Q}}
\newcommand{\obstacles}{\workspace\lbl{obs}}
\newcommand{\forwardoccupancy}{\regtext{FO}}
\newcommand{\forwardoccapprox}{\widetilde{\forwardoccupancy}}
\newcommand{\observationspace}{\set{O}}
\newcommand{\dynamics}{\vc{f}}
\DeclareMathOperator{\rotfunc}{RO}
\newcommand{\policy}{\vc{\pi}}
\newcommand{\zonotope}{\setfn{Z}}
\newcommand{\jth}{$j$\tss{th}\xspace}
\newcommand{\st}{\regtext{ s.t. }}
\newcommand{\backup}{\regtext{back}}
\DeclareMathOperator*{\argmin}{argmin}
\title{\LARGE \bf
RAIL: Reachability-Aided Imitation Learning for Safe Policy Execution
}
\author{Wonsuhk Jung, Dennis Anthony, Utkarsh A. Mishra, Nadun Ranawaka Arachchige, Matthew Bronars \\ Danfei Xu$^*$, and Shreyas Kousik$^*$
\thanks{* indicates equal contribution}
\thanks{This work was supported by the Georgia Tech AMPF.
All authors are with the Georgia Institute of Technology, Atlanta, GA, USA.
Corresponding author: \texttt{wonsuhk.jung@gatech.edu}}}
\begin{document}

\newcommand\redsout{\bgroup\markoverwith{\textcolor{red}{\rule[0.5ex]{2pt}{0.4pt}}}\ULon}

\maketitle
\thispagestyle{plain}
\pagestyle{plain} 

\begin{abstract}
Imitation learning (IL) has shown great success in learning complex robot manipulation tasks.
However, there remains a need for practical safety methods to justify widespread deployment.
In particular, it is important to certify that a system obeys hard constraints on unsafe behavior in settings when it is unacceptable to design a tradeoff between performance and safety via tuning the policy (i.e. soft constraints).
This leads to the question, how does enforcing hard constraints impact the performance (meaning safely completing tasks) of an IL policy?
To answer this question, this paper builds a reachability-based safety filter to enforce hard constraints on IL, which we call Reachability-Aided Imitation Learning (RAIL).
Through evaluations with state-of-the-art IL policies in mobile robots and manipulation tasks, we make two key findings.
First, the highest-performing policies are sometimes only so because they frequently violate constraints, and significantly lose performance under hard constraints.
Second, surprisingly, hard constraints on the lower-performing policies can occasionally increase their ability to perform tasks safely.
Finally, hardware evaluation confirms the method can operate in real time.
More results can be found at our website: \href{https://rail2024.github.io/}{https://rail2024.github.io/}.
\end{abstract}
\section{Introduction} \label{sec:intro}
Model-free learning has shown great success in the robotics domain, enabling robots to acquire complex behaviors without explicit modeling of task dynamics. 
Among these approaches, offline imitation learning (IL) has emerged as a powerful paradigm, allowing robots to learn from demonstrations without the need for extensive environmental interactions or carefully designed reward functions. Recent advancements in IL have demonstrated impressive generalization \cite{padalkar2023open}, performance \cite{shafiullah2022behavior}, and the ability to generate temporally consistent plans \cite{chi2023diffusion}.  In this work, we aim to build on top of effective IL frameworks.

That said, IL models still face significant challenges when deployed in real-world scenarios, particularly in safety-critical applications where performance must consider both task success and adherence to safety specifications.
These challenges stem from several factors.
The absence of physical interaction during training can lead to compounding errors when executing planned actions \cite{bagnell2015imitation}.
IL models tend to be brittle when encountering scenarios beyond their training data \cite{argall2009survey, bagnell2015imitation}.
Pre-trained models may struggle to adapt to dynamic changes in the environment during inference.
In the worst case, these factors can cause a robot to act unsafely by colliding with its environment and causing damage.
However, if we were to enforce hard safety constraints on a policy, it stands to reason that we may simultaneously sacrifice performance.

\begin{figure}[t]
    \centering
    \includegraphics[width=0.9\linewidth]{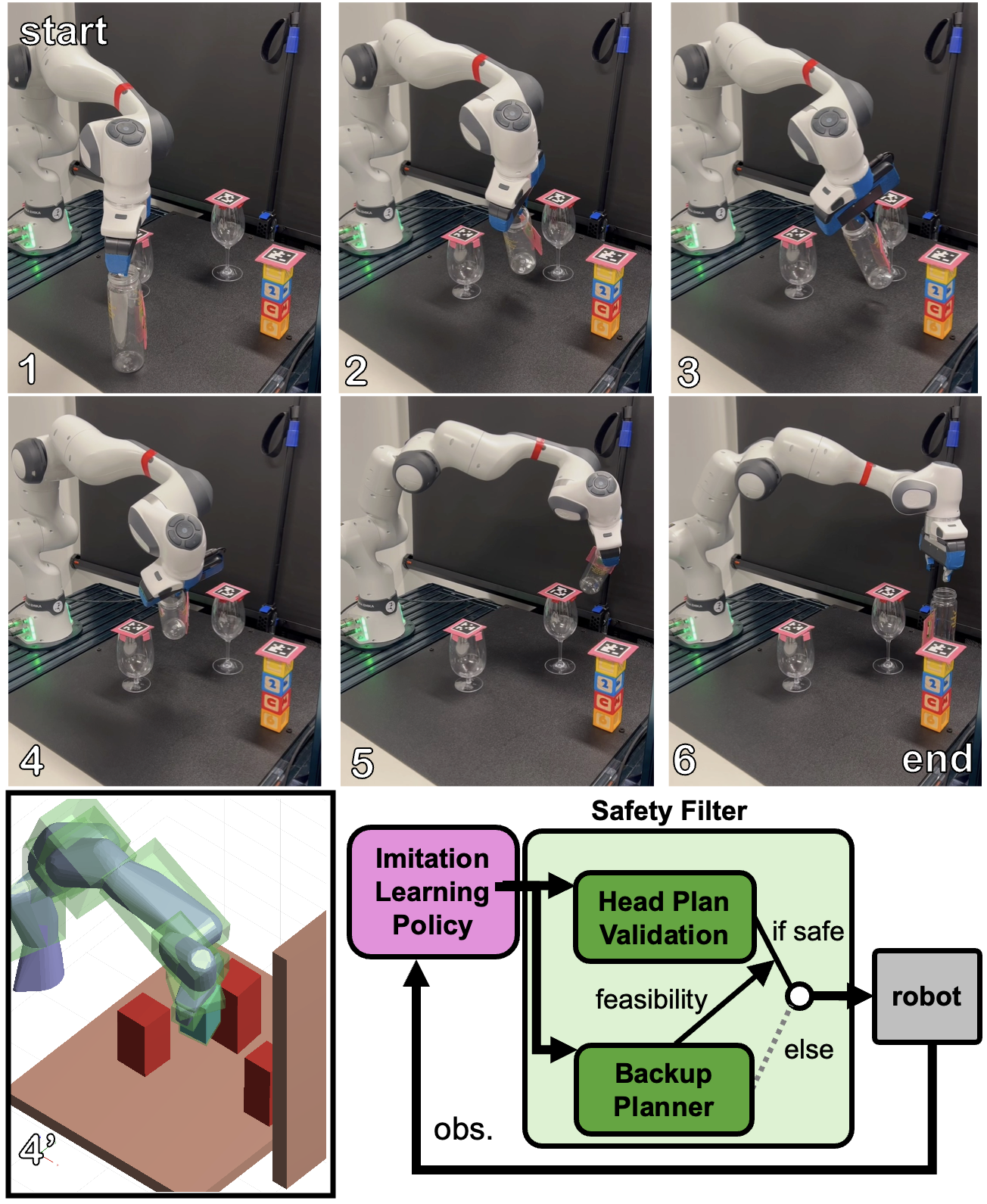}
    \caption{Our RAIL framework applied to a real-world robotic manipulation task. \textbf{Top}: A Franka robot arm safely executes a pick-and-place task among delicate obstacles.
    \textbf{Bottom}: System diagram illustrating how RAIL integrates an imitation learning policy with a safety filter, using plan validation and a failsafe planner to enforce hard constraints.
    }
    \label{fig:teaser_figure}
    \vspace*{-0.4cm}
\end{figure}

This paper seeks to understand the potential impact of imposing hard constraints on an IL policy.
Doing so first requires a strategy for enforcing hard constraints on IL.
To this end, we adapt techniques from model-based robot motion planning to propose \textbf{R}eachability-\textbf{A}ided \textbf{IL} (\textbf{RAIL}). 
Our approach introduces a safety filter that acts as a supervisory layer over the learned policy
by leveraging recent advancements in continuous-time collision checking and reachability analysis to provide robust safety guarantees in a three-dimensional workspace.
To achieve this in a robotics manipulation task, one critical challenge is efficiently computing the continuous-time swept volume of a multi-link manipulator arm given the planned action sequence output by the motion policy, such that the policy can run in real-time.
We address this challenge with a novel extension of prior work on polynomial zonotopes \cite{holmes2020armtd,michaux2023can} to overapproximate the robot's swept volume for fast, accurate collision checking in continuous time. 

By integrating this safety filter with IL policies, we find that our method allows for the exploitation of the strengths of learned policies in generating complex behaviors while enforcing hard constraints over an infinite time horizon.
Importantly, RAIL is composable with different state-of-the-art IL frameworks, including Diffusion Policy~\cite{chi2023diffusion} and ACT~\cite{zhao2023learning}, suggesting broad applicability. 

We evaluate the impact of hard constraints on IL via a series of increasingly complex tasks: navigating a 2D maze in simulation, manipulating objects in clutter in simulation, and maneuvering Franka arm hardware to transport a bottle while avoiding collisions with wine glasses and stacked blocks.
As expected, RAIL significantly improves policy safety across all tasks, achieving 0\% collision rates compared to 5-35\% for baseline policies, while maintaining high task success rates.
We find that the highest-performing seeds from these baseline policies also suffer a performance drop under hard constraints.
However, RAIL, in some cases excitingly, aids IL policies in discovering new behavioral modalities that safely complete the task, even when the IL policies alone fail to do so.
This demonstrates that incorporating hard safety constraints can lead to \textit{increased performance}, challenging the notion that they are inherently in opposition.
\section{Related Work}
\textbf{Offline Imitation Learning.}
Offline imitation learning (IL) algorithms typically learn from demonstrations through supervised regression~\cite{pomerleau1988alvinn, zhang2018deep}, offline reinforcement learning~\cite{levine2020offline, kumar2020conservative, janner2021offline, chebotar2023q}, or generative models~\cite{pearce2023imitating, mishra2023reorientdiff, mishra2023generative, mishra2024generative, reuss2023goal, reuss2023multimodal, janner2022planning, ajay2022conditional, chi2023diffusion}. Notably, Diffusion Policy~\cite{chi2023diffusion} introduces a powerful diffusion-based framework for offline IL. It learns the conditional distribution of action trajectory given the history of observations and executes the planned action sequence in a receding horizon fashion.  While such policies can learn complex behaviors effectively, their practical deployment presents several challenges: (1) the absence of physical interaction results in compounding errors while planning~\cite{bagnell2015imitation}, and (2) these methods tend to be fragile when encountering scenarios beyond their training data~\cite{argall2009survey, bagnell2015imitation}. Consequently, such policies can be unsafe. 
To address these issues, our method proposes a safety filter \cite{wabersich2023data, hsu2023safety} for offline IL policies to exploit the strengths of these policies while guaranteeing safety. 

\textbf{Safety for Learned Policies.}
A wide variety of approaches seek to enforce safety with learned policies.
Many approximately enforce collision avoidance constraints (e.g., \cite{dawson2022safe,qin2022sablas,carvalho2022conditioned,carvalho2023motion, ciftci2024safe}) or more complex constraints (e.g., staying close to contact \cite{kicki2023fast}); broadly speaking, the unifying concept is to incorporate a safety specification as a penalty or loss in training or test-time inference.
Other methods focus on probabilistic safety constraints \cite{menda2019ensembledagger, castaneda2022probabilistic}, ensuring that the likelihood of safety violations remains below a specified threshold.
In contrast, our method focuses on enforcing hard constraints as seen in  \cite{ yin2021imitation, chen2019deep, xiao2023safediffuser, bouvier2024policedrl, thumm2022provably,krasowski2020safe}, including our prior work \cite{selim2022safe,shao2021reachability}.
The common trend across these is to analyze \textit{invariant sets} \cite{chen2019deep,xiao2023safediffuser, bouvier2024policedrl} or \textit{reachable sets} \cite{krasowski2020safe, thumm2022provably, selim2022safe, shao2021reachability} of the trajectories of an uncertain dynamical system, and develop strict fallback or failsafe policies that can replace a learned policy's actions (i.e., a safety filter).
The key challenge across all such safety methods is to balance safety against performance, which requires careful design of the safety filter.
While some hard constraint methods for reinforcement learning see a drop or no change in performance \cite{thumm2022provably,krasowski2020safe}, others see an increase \cite{shao2021reachability}, suggesting the same may be true for IL.

\textbf{Safety for Manipulators.}
Manipulator collision avoidance is a classical problem that is challenging due to the nonlinear, set-valued map between state space and workspace \cite{lavalle2006planning}, and due to the need to represent collision avoidance in continuous time \cite{redon2005fast}.
Recent progress in numerical representation has enabled new ways to compute and differentiate through continuous-time collision checks with polynomial zonotopes \cite{holmes2020armtd,michaux2023can,michaux2024safe}, capsules \cite{schepp2022sara,pereira2017calculating}, or learned collision functions \cite{michaux2023rdf,chiang2021fast}.
A further challenge is to consider uncertain dynamics, which can be handled with polynomial zonotopes \cite{michaux2023can} or control barrier functions \cite{singletary2022safety}.
In this space, we extend our recent work \cite{holmes2020armtd,michaux2023can} from parameterized trajectories to trajectories planned by an IL policy.

\section{Problem Statement and Preliminaries}

To understand the impact of hard constraints on IL, we must first address the following:
\begin{problem}
    Plan and control the motion of a robot to complete tasks while guaranteeing the robot avoids collision with obstacles in continuous time as a hard constraint.
\end{problem}

\begin{rem}
    While we define safety as avoiding collisions between the robot and obstacles, the method can flexibly incorporate other safety specifications such as position or velocity constraints, as demonstrated in Section \ref{sec:results}.
\end{rem}

\textbf{Robot, Environment, and Safety.}
We consider a robot with state $\state(t) \in \statespace \subseteq \R^n$ at time $t$ and action $\action(t) \in \actionspace$ (e.g., desired end effector poses), executed by the low-level controller (e.g., Operational Space Controller \cite{khatib1987unified}.)
The robot's dynamics are represented as $\dot{\state}(t) = \dynamics(\state(t),\action(t))$.
The robot occupies a workspace $\workspace \subset \R^3$ via a forward occupancy map $\forwardoccupancy(\state(t)) \subset \workspace$.
We denote workspace obstacles $\obstacles \subset \workspace$.
The robot receives observations $\observation(t) \in \observationspace$ (e.g., joint states or RGB-D images) at a fixed rate.
We assume full observability, where observations $\observation(t)$ include privileged information for safety verification, such as the true robot state, task-relevant object poses, and object physics.
Given a trajectory $\state: [0,\infty) \to \configspace$, safety means $\forwardoccupancy(\state(t)) \cap \obstacles = \emptyset$ for all time, plus other task-dependent constraints (e.g., max velocity).

\textbf{Set Operations.}
We use the following operations to compute swept volumes to enforce hard constraints.
For $\vc{p}_1, \vc{p}_2 \in \R^n$, the line segment between them is
\begin{align*}
    \linesegment{\vc{p}_1\vc{p}_2} =
    \big\{\vc{p} \in \R^n \given
        \vc{p} = \alpha\vc{p}_1 + (1-\alpha)\vc{p}_2,\ \alpha \in [0,1]
    \big\}.
\end{align*}
Let $\set{X}, \set{Y} \subset \R^n$.
We use the Minkowski sum, Cartesian product, set product, and cross product (for $n = 3$):
\begin{align*}
    \set{X} \oplus \set{Y} &= \left\{ 
        \vc{x} + \vc{y} \given
        \vc{x} \in \set{X}, \vc{y} \in \set{Y}
    \right\}, \\
    \set{X}\times\set{Y} &= \{
            (\vc{x},\vc{y}) \in \R^{n+m} \given
             \vc{x} \in \set{X}, \vc{y} \in \set{Y}
        \}, \\
    \set{X}\set{Y} &= \left\{ 
        \vc{X}\vc{Y} \in \R^{n\times k} \given \vc{X} \in \set{X}, \vc{Y} \in \set{Y}
    \right\},\ \regtext{and} \\
    \set{X}\supercross\set{Y} &= \left\{
        \left[\begin{smallmatrix}
            0 & -x_3 & x_2 \\
            x_3 & 0 & -x_1 \\
           -x_2 & x_1 & 0
        \end{smallmatrix}\right]\vc{y} \in \R^3 \given
        \left[\begin{smallmatrix}
            x_1 \\ x_2 \\ x_3
        \end{smallmatrix}\right] \in \set{X},\ 
        \vc{y} \in \set{Y}
    \right\}.
\end{align*}
For a function $f$, we denote $f(\set{X}) = \{f(\vc{x}) \given \vc{x} \in \set{X}\}$.
We apply linear operations on the left or right as appropriate (e.g., $\vc{A}\set{X}$ or $\set{X}\vc{A}$).
We numerically implement all set operations with polynomial zonotopes \cite{kochdumper2020sparse} as per \cite[\S III-C]{michaux2023can}.
\section{Framework Overview}

We aim to incorporate hard safety guarantees in state-of-the-art imitation learning frameworks directly at inference.
Our method adopts reachability-based trajectory design and continuous time collision checking to guarantee safety while leveraging the multi-modal behavior learning of a baseline motion policy.
We provide the details for each design segment of our framework below.

\subsection{Motion Policy}
We consider the paradigm of offline IL where we have a dataset $\dataset = \{(o, a)\}$ containing pre-recorded demonstration trajectories (sequences of observation-action pairs), of successfully completing the task objective. 
We seek to train an imitation learning policy that maps observations to action motion plans $\policy: \observationspace \to \actionspace^\planninghorizon$ where $\planninghorizon \in \N$  is the plan length.
Finally, we assume the obstacles $\obstacles$ may not be present in the training data, or may change from training to test time, but can be observed at test time. In this paper, we mainly consider Diffusion Policy~\cite{chi2023diffusion} as our motion policy and learn the target conditional distribution $p_\pi(\action_{\timestep:\timestep+\planninghorizon}|\observation_t)$. We note that our method can generalize to any similar offline IL algorithms and include additional base algorithms in experiments. 

\subsection{Reachability-Assisted Imitation Learning}

\begin{algorithm}[t]
\label{algo:rail}
\small
\caption{Reachability Aided IL (RAIL)}
\DontPrintSemicolon
\KwIn{Observation $\observation_{0}$, IL policy $\policy$, Backup Policy $\policy_{\backup}$}

// initialize plan and backup plan \;
$(\timestep, \state_{0}) \gets (0, \text{ \textbf{stateEstimate}}(\observation_0))$\;
$\hat \action_{0:\backuphorizon} \gets \policy_{\backup}(\state_{0})$\;
$(\action_{0:\planninghorizon}, {}^{\backup}\action) \gets (\hat \action_{0:\planninghorizon}, \hat{\action}_{\actionhorizon: \backuphorizon})$ \;
\While{task is not done}{
Apply $\action_{\timestep:\timestep+\actionhorizon}$ to the robot for horizon of $\timestep \text{ to } \timestep + \actionhorizon$\;
// execute lines below while applying actions \;
$\observation_{\timestep + \actionhorizon} \gets \text{\textbf{predictObservation}}(\observation_{\timestep}, \action_{\timestep:\timestep+\actionhorizon})$\;
$\state_{\timestep + \actionhorizon} \gets \text{\textbf{stateEstimate}}(\observation_{\timestep + \actionhorizon})$

$\action_{\timestep+\actionhorizon:\timestep+\actionhorizon+\planninghorizon} \gets \policy(\observation_{\timestep + \actionhorizon})$\; 
$(\action_{\timestep+\actionhorizon: \timestep+2\actionhorizon}, {}^{\backup}\action) \gets \text{Filter}(\state_{\timestep+\actionhorizon}, \action_{\timestep+\actionhorizon:\timestep+\actionhorizon+\planninghorizon}, {}^{\backup}\action$) \;

\If{${}^{\backup}\action$ is empty}{
// robot stopped safely \;
Set up a backup plan as Line 2-4}
}
\end{algorithm}

\begin{algorithm}[t]
\label{algo:reach_filter}
\small
\caption{Filter}
\DontPrintSemicolon

\KwIn{State $\state_{\timestep}$, Nominal plan $\action_{\timestep:\timestep+\planninghorizon}$ Backup plan ${}^{\backup}\action$}
\KwOut{Filtered plan $\action_{\timestep:\timestep+\actionhorizon}$, Updated Backup plan ${}^{\backup}\action$}

$\state_{\timestep:\timestep+\planninghorizon} \gets \textbf{predictState}(\state_{\timestep}, \action_{\timestep:\timestep+\planninghorizon})$ \;

\If{$\forwardoccupancy(\state_{\timestep:\timestep+\actionhorizon}) \cap \obstacles = \emptyset$}{
    $\hat \action_{\timestep+\actionhorizon:\timestep+\actionhorizon+\backuphorizon} \gets \policy_{\backup}(\state_{\timestep+\actionhorizon})$\;
    \If{$\hat \action_{\timestep+\actionhorizon:\timestep+\actionhorizon+\backuphorizon}$ is feasible}{
        \Return $\action_{\timestep:\timestep+\actionhorizon}$, $\hat{\action}_{\timestep+\actionhorizon:\timestep+\actionhorizon+\backuphorizon}$ \;
    }
    \Else{
        \Return ${}^{\backup}\action_{\timestep:\timestep+\actionhorizon}$,${}^{\backup}\action_{\timestep+\actionhorizon:\text{end}}$ \;
    }
}
\end{algorithm}

This section presents our first contribution: a framework that interfaces an imitation learning policy with a model-based backup planner using reachability analysis.

We adopt receding-horizon planning, where the robot plans over a finite horizon $\planninghorizon$, but only executes the first $\actionhorizon$ steps before re-planning. 

The key idea is to maintain a safe failsafe \cite{thumm2022provably} backup plan using reachability analysis while verifying the safety of the proposed plan by imitation learning policy.
Towards, we build upon the safety filter concept from \cite{wabersich2023data, hsu2023safety}, where a safe policy intervenes only when the nominal policy is verified to be unsafe.
We especially override the definition of safety with not only the safety of the plan proposed by the imitation learning policy but also verifying whether the backup plan can be attached at the end of the plans to execute (i.e., $\action_{\timestep: \timestep+\actionhorizon}.)$
The framework comprises two key safety verification algorithms for the proposed plan $\action_{\timestep: \timestep + \planninghorizon}$

\textbf{Head Plan Verification.}
First, we introduce the head-plan safety verifier, which checks the continuous-time safety of arbitrary plans from the imitation learning policy using reachability analysis.
We verify the safety of the plan $\action_{\timestep: \timestep + \actionhorizon}$, which we call as \textit{head-plan}.
We detail this verifier, particularly for manipulators, in Section \ref{sec:cont_time_col_check}.

\textbf{Backup Plan Creation.}
Second, we verify the existence of a backup plan, using a reachability-based model-based planner similar to \cite{kousik2017safe, holmes2020armtd}.
This planner generates failsafe plans, parameterized by low-dimensional (e.g., number of joints for a manipulator) parameter $\param$, where a failsafe maneuver is a stopping action.
Note that the failsafe is not concerned with task completion, only safety, so using a low-dimensional parameter is sufficient.

At each iteration, the planner solves the following nonlinear optimization problem to find a safe plan parameter that satisfies the safety requirements:
\begin{equation}
    \label{eq:backup_trajopt}
    \param^{*} = \argmin_{\param} f(\state_{0}, \param)
    \st g(\state_0, \param) < 0
\end{equation}
where $f$ is an objective function and $g$ is the reachability-based safety constraint.
The objective function includes, but is not restricted, to the projection objective, meaning minimizing the difference between the proposed plan from the imitation learning policy and the parameterized backup plan. 
The solution of \eqref{eq:backup_trajopt} compactly represents the desired trajectory $\state(\timestep)$ whose initial state is $\state_{0}$ and safe throughout the horizon. 
By discretizing this trajectory and taking the desired state as the action representation, and using a low-level controller to track this trajectory, we formulate a backup policy as $\policy_{\backup}: \statespace \rightarrow \actionspace^{\backuphorizon}$, where $\backuphorizon$ is the horizon of the parameterized failsafe trajectory.
The framework is detailed in Algs.~\ref{algo:rail} and \ref{algo:reach_filter}.

\section{Continuous-Time Collision Checking}
\label{sec:cont_time_col_check}

We now detail our approach to verifying head plan safety to enable hard collision-avoidance constraints for IL.
This addresses two key challenges.
First, most IL policies output discrete states/actions, but collisions occur in continuous time.
Second, unlike in our past work on safe RL for mobile robots \cite{shao2021reachability,selim2022safe}, we cannot simply replace the IL output with parameterized plans (i.e., backup plans), because this would collapse the richness of the IL policy.

To solve these challenges, we propose a novel approach extension of our prior work \cite[Sec. VIII-A]{michaux2023can} to compute continuous-time swept volume for the head plan output by the IL motion policy.
First, we review forward occupancy, then explain our approach and implementation details.

\textbf{Manipulator Forward Occupancy.}
We compute $\forwardoccupancy$ in the following well-known way (c.f., \cite[Ch. 3.3]{lavalle2006planning}).
Suppose the robot has $n$ revolute joints and configuration $\config \in \configspace \subseteq (\unitcircle)^n$.
Suppose that the robot's \jth link has a local reference frame at the center of its joint with link $j-1$ and rotates about the local vector $\rotaxis_j$, and $j=0$ is the baselink/inertial frame.
For $\config \in \configspace$, let $\config_j$ denote its \jth element.
Then
$\rotmat_j^{j-1}(\config_j,\rotaxis_j) = \eye_{3\times 3} +
        \sin(\config_j)\rotaxis_j\supercross + (1-\cos(\config_j))(\rotaxis_j\supercross)^2$
is the rotation from frame $j-1$ to frame $j$ per  Rodrigues' rotation formula.
Suppose $\transvec_j^{j-1}$ is the origin of frame $j$ in frame $j-1$, and let $\link_j \subset \R^3$ represent the volume of the \jth link in the \jth reference frame.
Then link $j$'s forward occupancy is
\begin{align}
\forwardoccupancy_j(\config) = \{\transvec_j(\config)\}\oplus   
    \left(
        \rotmat_j(\config)\link_j
    \right) \subset \workspace,
    \label{eq: forward occupancy}
\end{align}
where $\rotmat_j(\config) = \prod_{i=1}^j\rotmat_j^{i-1}(\config_j)$
and $\transvec_j(\config) = \sum_{i=1}^j \rotmat_i(\config)\transvec_i^{i-1}$.
Letting $n = \dim(\configspace)$, the forward occupancy of the robot is
$\forwardoccupancy(\config) = \bigcup_{j=1}^{n} \forwardoccupancy_j(\config).$

\textbf{Method Overview.}
To proceed to overapproximate the rotation of a single joint using a box (Lem.~\ref{lem: box overapproximate arc}), which we use to overapproximate the joint's rotation matrices (Lem.~\ref{lem: rotation matrix set overapproximation}), and then the forward occupancy (Thm.~\ref{thm: forward occupancy overapproximation} and Cor.~\ref{cor: FO overapprox for trajectory}).

\begin{lem}\label{lem: box overapproximate arc}
Suppose a 1-D revolute joint travels counterclockwise (CCW) from an angle $\jointangle_1$ to $\jointangle_2 > \jointangle_1$.
Map this motion to the unit circle $\unitcircle$ as $\ucpoint_1 = (\cos(\jointangle_1),\sin(\jointangle_1))$ and
$\ucpoint_2 = (\cos(\jointangle_2),\sin(\jointangle_2))$.
Define
$\ucpoint_3 = \tfrac{1}{2}(\ucpoint_1 + \ucpoint_2)$ and
$\ucpoint_4 = (\cos(\tfrac{1}{2}(\jointangle_1+\jointangle_2),\sin(\tfrac{1}{2}(\jointangle_1+\jointangle_2))))$.
Let $\ucpoint_5 = \tfrac{1}{2}(\ucpoint_3 + \ucpoint_4)$.
Then, for every $\jointangle \in [\jointangle_1,\jointangle_2]$, we have
\begin{align*}
    (\cos(\jointangle),\sin(\jointangle)) \in 
    \boxset(\jointangle_1,\jointangle_2) :=
    \linesegment{\ucpoint_1\ucpoint_2}\oplus
    \linesegment{(\ucpoint_3-\ucpoint_5)(\ucpoint_4-\ucpoint_5)}.
\end{align*}
\end{lem}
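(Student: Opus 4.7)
The plan is to use rotational symmetry to reduce to a canonical symmetric arc, unfold the Minkowski sum into a coordinate-aligned rectangle, and then match arc points to box points by solving for two parameters.

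First, I would observe that every quantity in the lemma---the points $\vc{p}_1,\ldots,\vc{p}_5$, the chord segment $\linesegment{\vc{p}_1\vc{p}_2}$, and the perpendicular segment $\linesegment{(\vc{p}_3-\vc{p}_5)(\vc{p}_4-\vc{p}_5)}$---transforms equivariantly under rotations of $\R^2$. Applying a rotation by $-\tfrac12(\jointangle_1+\jointangle_2)$, I may reduce to $\jointangle_1 = -\Delta$, $\jointangle_2 = \Delta$ with $\Delta = \tfrac12(\jointangle_2-\jointangle_1)$. In this frame the chord is vertical with $\vc{p}_2-\vc{p}_1 = (0,\, 2\sin\Delta)$ and the sagitta direction is horizontal with $\vc{p}_4-\vc{p}_3 = (1-\cos\Delta,\, 0)$, so $\boxset(\jointangle_1,\jointangle_2)$ is an axis-aligned rectangle. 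Using $\vc{p}_5 = \tfrac12(\vc{p}_3+\vc{p}_4)$, the perpendicular segment collapses to $\{\gamma(\vc{p}_4-\vc{p}_3) : \gamma \in [-\tfrac12,\tfrac12]\}$, so a generic box point may be written as $\vc{p}_3 + \mu\,\tfrac{\vc{p}_2-\vc{p}_1}{2} + \gamma(\vc{p}_4-\vc{p}_3)$ with $\mu \in [-1,1]$ and $\gamma \in [-\tfrac12,\tfrac12]$.

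Given an arbitrary $\phi \in [-\Delta,\Delta]$, I would then match $(\cos\phi,\sin\phi)$ to this parametrization: the $y$-coordinate forces $\mu = \sin\phi/\sin\Delta$ and the $x$-coordinate forces $\gamma = (\cos\phi - \cos\Delta)/(1-\cos\Delta)$. The bound $|\mu|\le 1$ is immediate from monotonicity of $\sin$ on $[-\pi/2,\pi/2]$. Once $(\mu,\gamma)$ are shown to lie in the required intervals, undoing the canonical rotation recovers the original claim.

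The main obstacle is the bound on $\gamma$---equivalently, showing $|\cos\phi - \cos\Delta| \le \tfrac12(1-\cos\Delta)$ for every $\phi \in [-\Delta,\Delta]$. I would attack this with the half-angle identity $1-\cos(\cdot) = 2\sin^2(\cdot/2)$ to recast the inequality as a comparison between $\sin^2(\Delta/2)$ and $\sin^2(\phi/2)$, working under the natural hypothesis that the swept angle satisfies $\Delta \le \pi/2$ so that $\sin$ and $\cos$ remain monotonic on the relevant intervals. All remaining steps are mechanical bookkeeping; this trigonometric comparison on the sagitta is the crux of the argument.
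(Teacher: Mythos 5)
Your coordinate reduction is sound and considerably more rigorous than the paper's own proof, which is a two-sentence appeal to Fig.~\ref{fig: arc overapproximation} ``by construction'' (and which refers to a zonotope $\zonotope(\vc{c},\Gen)$ that never appears in the statement). However, the step you defer as the ``crux'' is not something a half-angle identity will rescue: the inequality $|\cos\phi-\cos\Delta|\le\tfrac12(1-\cos\Delta)$ is false. At $\phi=0$ it reads $1-\cos\Delta\le\tfrac12(1-\cos\Delta)$, which fails for every $\Delta>0$. Your own parametrization shows why: since $\ucpoint_5$ is the midpoint of $\ucpoint_3$ and $\ucpoint_4$, the second Minkowski summand $\linesegment{(\ucpoint_3-\ucpoint_5)(\ucpoint_4-\ucpoint_5)}$ is centered at the \emph{origin}, so $\boxset(\jointangle_1,\jointangle_2)$ is the rectangle centered at the chord midpoint $\ucpoint_3$ whose radial extent reaches only $\cos\Delta+\tfrac12(1-\cos\Delta)=\tfrac12(1+\cos\Delta)<1$, whereas the arc bulges out to radius $1$. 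In particular the arc midpoint $\ucpoint_4$ itself is not an element of $\boxset(\jointangle_1,\jointangle_2)$: its distance from the chord is the full sagitta $1-\cos\Delta$, but the box extends only half a sagitta past the chord. The lemma as stated is therefore false, and no proof along your lines (or any other) can close this gap.

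What your computation actually exposes is a misstatement of the box: it needs to be recentered at $\ucpoint_5$, i.e., translated by $\ucpoint_5-\ucpoint_3=\tfrac12(\ucpoint_4-\ucpoint_3)$, or equivalently the second summand should be $\linesegment{\vc{0}(\ucpoint_4-\ucpoint_3)}$ (this matches the rectangle drawn in the paper's figure and the zonotope $\zonotope(\vc{c},\Gen)$ with center $\ucpoint_5$ invoked in its proof). With that correction your matching gives $\gamma=(\cos\phi-\cos\Delta)/(1-\cos\Delta)\in[0,1]$, which follows immediately from $\cos\Delta\le\cos\phi\le1$, and the rest of your argument goes through --- provided you also add the hypothesis $\jointangle_2-\jointangle_1\le\pi$ (your $\Delta\le\pi/2$), which you correctly flag as necessary for $|\mu|\le1$ and for $\cos\phi\ge\cos\Delta$, and which is likewise missing from the lemma. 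I would recommend proving the corrected statement and noting the needed repairs, rather than attempting to complete the proof as written.
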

\begin{proof}
Notice that $\ucpoint_3$ is halfway along the line segment between $\ucpoint_1$ and $\ucpoint_2$.
Similarly, $\ucpoint_4$ is the projection of $\ucpoint_3$ onto the unit circle CCW between $\ucpoint_1$ and $\ucpoint_2$.
Thus, by construction and CCW motion, $\zonotope(\vc{c},\vc{G})$ is a rectangle containing all four points and the arc from $\ucpoint_1$ to $\ucpoint_2$, as in Fig.~\ref{fig: arc overapproximation}.
\end{proof}

\begin{figure}[ht]
    \vspace*{-0.4cm}
    \centering
    \includegraphics[width=0.95\columnwidth]{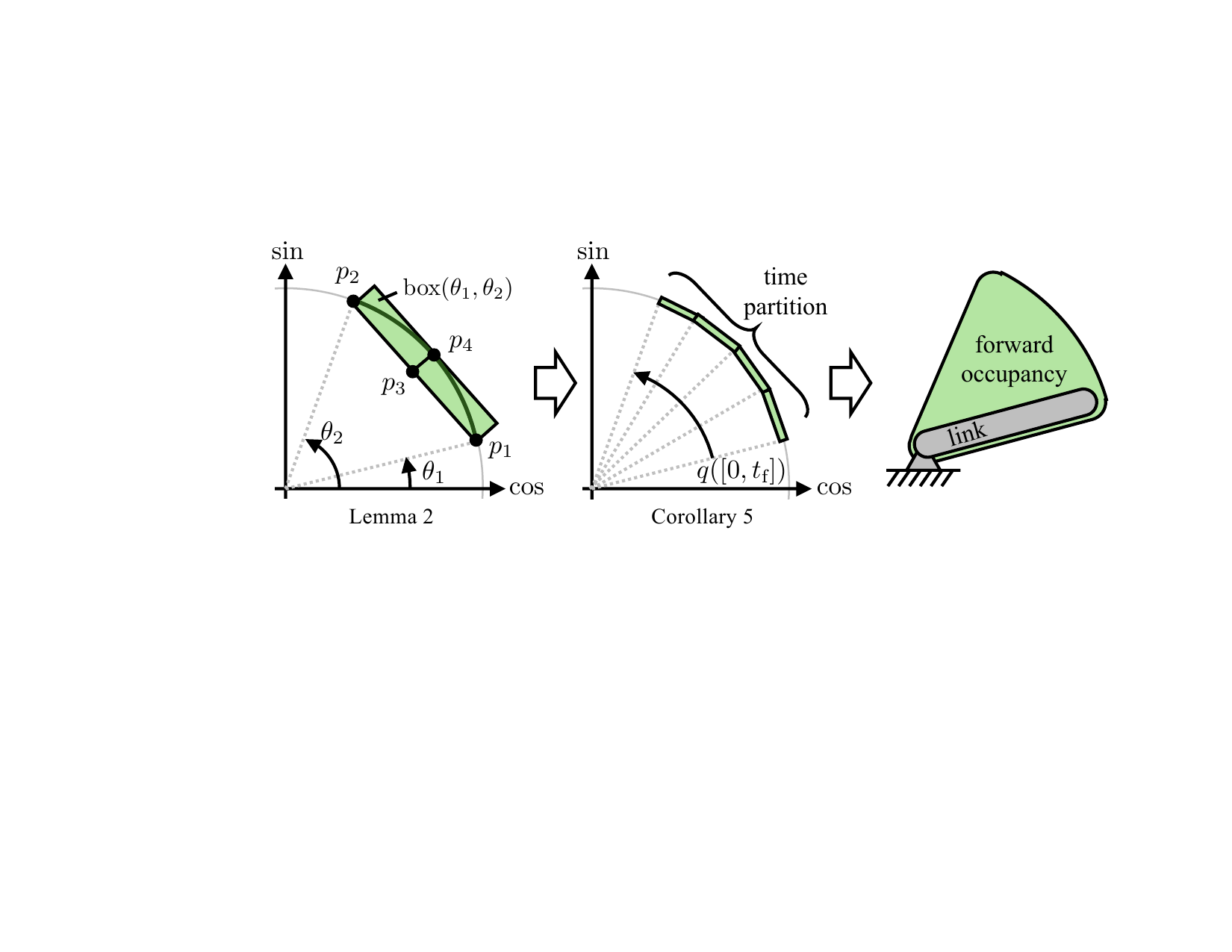}
    \vspace*{-0.2cm}
    \caption{We overapproximate the sines and cosines of the robot's joint angles along a trajectory to enable overapproximating the robot's swept volume.}
    \label{fig: arc overapproximation}
    \vspace*{-0.3cm}
\end{figure}

Next, we overapproximate the set of rotation matrices corresponding to a single revolute joint.
To enable this, we consider Rodrigues' rotation formula as a function
\begin{align}
    \rotfunc(\vc{p},\rotaxis) = \eye_{3\times 3} + p_2\rotaxis^\times + (1-p_1)(\rotaxis^\times)^2
\end{align}
for a point $\vc{p} = (p_1,p_2) \in \R^2$ and a rotation axis $\rotaxis \in \R^3$.
\begin{lem}\label{lem: rotation matrix set overapproximation}
    Consider a revolute joint with local rotation axis $\rotaxis \in \R^3$ rotating through an interval $[\theta^-,\theta^+]$.
    Then, for any rotation matrix $\rotmat$ corresponding to this rotation, we have $\rotmat \in \rotfunc(\boxset(\theta^-,\theta^+)),\rotaxis)$
\end{lem}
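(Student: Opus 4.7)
The plan is to recognize that Rodrigues' rotation formula, as rewritten via the function $\rotfunc$, expresses each individual rotation matrix $\rotmat$ associated with an angle $\theta$ as the image under $\rotfunc(\cdot,\rotaxis)$ of the single point $(\cos\theta,\sin\theta)$ on the unit circle. The previous lemma already tells us that as $\theta$ varies over $[\theta^-,\theta^+]$, this point stays inside the box $\boxset(\theta^-,\theta^+)$. Taking the image of the box under $\rotfunc(\cdot,\rotaxis)$ then captures every such rotation matrix.

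First, I would fix an arbitrary angle $\theta \in [\theta^-,\theta^+]$ and any rotation matrix $\rotmat$ corresponding to rotating by $\theta$ about $\rotaxis$. By Rodrigues' formula,
\begin{align*}
\rotmat \;=\; \eye_{3\times 3} + \sin(\theta)\,\rotaxis\supercross + (1-\cos(\theta))\,(\rotaxis\supercross)^2,
\end{align*}
so setting $\vc{p} = (\cos\theta,\sin\theta) \in \R^2$ yields exactly $\rotmat = \rotfunc(\vc{p},\rotaxis)$ using the definition of $\rotfunc$ introduced immediately before the lemma.

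Second, I would invoke Lem.~\ref{lem: box overapproximate arc} to conclude that $\vc{p} = (\cos\theta,\sin\theta) \in \boxset(\theta^-,\theta^+)$, since $\theta$ lies in the CCW arc from $\theta^-$ to $\theta^+$. Third, I would apply the paper's set-valued function convention, $f(\set{X}) = \{f(\vc{x}) \mid \vc{x} \in \set{X}\}$, to the map $\vc{p} \mapsto \rotfunc(\vc{p},\rotaxis)$, which gives $\rotmat = \rotfunc(\vc{p},\rotaxis) \in \rotfunc(\boxset(\theta^-,\theta^+),\rotaxis)$, completing the argument since $\theta$ and $\rotmat$ were arbitrary.

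I do not expect any serious obstacle: the proof is essentially a one-line chase once Lem.~\ref{lem: box overapproximate arc} is in hand. The only mild subtleties to check are (i) that the rotation direction implicit in ``rotating through $[\theta^-,\theta^+]$'' is consistent with the CCW hypothesis of the prior lemma (which I would state explicitly), and (ii) that the set-valued extension of $\rotfunc(\cdot,\rotaxis)$ is the intended one, i.e.\ the box is substituted into the $\vc{p}$-slot while $\rotaxis$ is held fixed. These are notational bookkeeping rather than mathematical difficulty.
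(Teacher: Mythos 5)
Your proposal is correct and follows essentially the same route as the paper, which simply states that the result follows from Lem.~\ref{lem: box overapproximate arc} because Rodrigues' formula is applied pointwise to every element of the box; you have merely spelled out the point-chase (fix $\theta$, identify $\rotmat = \rotfunc((\cos\theta,\sin\theta),\rotaxis)$, invoke the containment from the prior lemma, and apply the set-image convention) that the paper leaves implicit. Your noted subtleties about rotation direction and the slot into which the box is substituted are reasonable bookkeeping but do not change the argument.
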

\begin{proof}
    This follows from Lemma \ref{lem: box overapproximate arc}, since Rodrigues' rotation formula is applied to every element of $\jointset$.
\end{proof}

Now we can overapproximate the swept volume of each link of the robot given bounded rotations of each joint:
\begin{thm}\label{thm: forward occupancy overapproximation}
Suppose each joint rotates through an interval $\jointset_j = [\theta_j^-,\theta_j^+]$.
Let $\jointset = \jointset_1\times\cdots\times\jointset_n$, and define
$
    \rotset_j(\jointset) = \prod_{i=1}^j \rotfunc(\boxset(\theta_j^-,\theta_j^+),\rotaxis_j)
$ and
$
    \transset_j(\jointset) = \sum_{i=1}^j \rotset_j(\jointset)\transvec_i^{i-1}.
$
Then, for any $\config \in \jointset$,
$
    \forwardoccupancy_j(\config) \subset \{\transset_j(\jointset)\}\oplus   
    \left(
       \rotset_j(\jointset,\rotaxis_j)\link_j
    \right),
$
where $\rotset_j(\jointset,\rotaxis_j)\link_j$ is a set product.
\end{thm}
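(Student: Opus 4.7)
The plan is to prove the stated containment by direct substitution, building up from the single-joint bounds of Lemmas~\ref{lem: box overapproximate arc} and~\ref{lem: rotation matrix set overapproximation} through the kinematic chain, and then plugging into the definition of $\forwardoccupancy_j$ from \eqref{eq: forward occupancy}.

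First, I would fix an arbitrary $\config \in \jointset$. By definition of the Cartesian product, $\config_i \in \jointset_i = [\theta_i^-, \theta_i^+]$ for every $i$. Applying Lemma~\ref{lem: rotation matrix set overapproximation} joint-by-joint, each elementary rotation $\rotmat_i^{i-1}(\config_i, \rotaxis_i)$ lies in $\rotfunc(\boxset(\theta_i^-,\theta_i^+),\rotaxis_i)$. I would then invoke the set-product definition from Section~III (specifically $\set{X}\set{Y} = \{\vc{X}\vc{Y} : \vc{X} \in \set{X},\ \vc{Y} \in \set{Y}\}$) to conclude, by a short induction on $j$, that the cumulative rotation $\rotmat_j(\config) = \prod_{i=1}^j \rotmat_i^{i-1}(\config_i,\rotaxis_i)$ lies in $\rotset_j(\jointset)$.

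Second, I would handle the translation by noting that each summand $\rotmat_i(\config)\transvec_i^{i-1}$ is contained in $\rotset_i(\jointset)\transvec_i^{i-1}$ (applying the step above at each $i$). Since a sum of elements drawn one-by-one from a family of sets lies in the corresponding Minkowski sum, we obtain $\transvec_j(\config) \in \transset_j(\jointset)$.

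Finally, I would substitute these two containments into \eqref{eq: forward occupancy}: the singleton $\{\transvec_j(\config)\}$ is contained in $\transset_j(\jointset)$, and the rotated link volume $\rotmat_j(\config)\link_j$ is contained in the set product $\rotset_j(\jointset,\rotaxis_j)\link_j$. Monotonicity of $\oplus$ under set containment then yields the claimed overapproximation. The main subtlety, which I would flag but not treat as a true obstacle, is that the construction is deliberately loose: in $\transset_j(\jointset)$, the copies of $\rotset_i(\jointset)$ appearing in different summands are treated as independent, whereas in the true $\transvec_j(\config)$ the shared rotations $\rotmat_i(\config)$ are coupled across terms. This looseness is harmless for soundness (containment still holds) and is what distinguishes the proof of an overapproximation from an exact equality; the polynomial-zonotope implementation cited from~\cite{michaux2023can} can later tighten this by tracking the shared indeterminates.
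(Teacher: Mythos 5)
Your proof is correct and follows essentially the same route as the paper, whose proof is simply the one-line remark that the result ``follows directly from Lemma~\ref{lem: rotation matrix set overapproximation} and the forward occupancy definition \eqref{eq: forward occupancy}''; you have merely filled in the joint-by-joint application of that lemma, the set-product/Minkowski-sum bookkeeping for $\rotset_j$ and $\transset_j$, and the final substitution that the paper leaves implicit. Your observation about the deliberate decoupling of the shared rotations across summands is a sound and worthwhile clarification, but it does not change the argument.
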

\begin{proof}
    This follows directly from Lemma~\ref{lem: rotation matrix set overapproximation} and the forward occupancy definition \eqref{eq: forward occupancy}.
\end{proof}

Thm.~\ref{thm: forward occupancy overapproximation} would be very conservative if applied for all joints along an entire trajectory, since the joints are treated independent of time.
We mitigate this by partitioning time, which reduces the overapproximation as shown in Fig.~\ref{fig: arc overapproximation}:
\begin{cor}\label{cor: FO overapprox for trajectory}
    Consider a trajectory $\config: [0,t\final] \to \configspace$.
    Partition time into $[0,t\final] = \union_{i=1}^{m-1} [i\delta,(i+1)\delta]$ with $\delta = \frac{t\final}{m}$ and $m \in \N$.
    For each $i = 1,\cdots,m-1$, suppose for each \jth joint angle, we know its min and max values: $\config_j([i\delta,(i+1)\delta]) \in \jointset_{j,i} = [\config_{j,i}^-,\config_{j,i}^+]$.
    Then we construct $\jointset_i = \jointset_{1,i}\times\cdots\times\jointset_{n,i}$ and have
    \begin{align}\label{eq: FO overapprox for trajectory}
        \forwardoccupancy(\config([0,t\final])) \subset
            \union_{j=1}^n
            \union_{i=1}^{m-1} \{\transset_j(\jointset_i)\}\oplus   
            \left(
               \rotset_j(\jointset_i,\rotaxis_j)\link_j
            \right).
    \end{align}
\end{cor}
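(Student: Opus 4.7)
The plan is to reduce the corollary to a union of instances of Thm.~\ref{thm: forward occupancy overapproximation}, one applied on each time partition, and then to collect the resulting set inclusions via elementary properties of unions and the set-valued forward occupancy map $\forwardoccupancy(\set{X}) = \union_{\config\in\set{X}} \forwardoccupancy(\config)$. The key observation driving conservatism reduction is that on each sub-interval $[i\delta,(i+1)\delta]$, the joint trajectory is constrained to a tight Cartesian product box $\jointset_i$, so Thm.~\ref{thm: forward occupancy overapproximation} yields a much less conservative bound than a single application across the entire horizon.

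First, I would split the trajectory along the time partition:
\begin{align*}
    \forwardoccupancy(\config([0,t\final])) =
        \union_{i=1}^{m-1} \forwardoccupancy(\config([i\delta,(i+1)\delta])),
\end{align*}
which follows from the trajectory image commuting with unions of its domain. Second, for each $i$, I would argue from the hypothesis $\config_j([i\delta,(i+1)\delta]) \subset \jointset_{j,i}$ that $\config([i\delta,(i+1)\delta]) \subset \jointset_i$ (i.e., a vector whose components each lie in their interval lies in the Cartesian product). Third, by the definition $\forwardoccupancy(\config) = \union_{j=1}^n \forwardoccupancy_j(\config)$, each per-interval occupancy decomposes as a union over links, so I would apply Thm.~\ref{thm: forward occupancy overapproximation} to each link $j$ over the box $\jointset_i$ to obtain
\begin{align*}
    \forwardoccupancy_j(\config([i\delta,(i+1)\delta])) \subset
        \{\transset_j(\jointset_i)\}\oplus \left(\rotset_j(\jointset_i,\rotaxis_j)\link_j\right).
\end{align*}
Finally, I would take the union of these inclusions over both $i$ and $j$ to recover the right-hand side of \eqref{eq: FO overapprox for trajectory}.

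The main obstacle is mostly bookkeeping: carefully tracking that the set-valued forward occupancy map distributes over unions of its input, and justifying the componentwise-to-product containment step. No new analytical machinery is required beyond what Lem.~\ref{lem: box overapproximate arc}, Lem.~\ref{lem: rotation matrix set overapproximation}, and Thm.~\ref{thm: forward occupancy overapproximation} already supply; the corollary is essentially a piecewise-in-time upgrade of Thm.~\ref{thm: forward occupancy overapproximation}. I would note in passing that shrinking $\delta$ (larger $m$) tightens the boxes $\jointset_{j,i}$ and hence the overapproximation, which is precisely the conservatism-reduction property the corollary is designed to supply for downstream continuous-time collision checking.
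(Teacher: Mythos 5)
Your proposal is correct and is essentially an expanded version of the paper's own one-line proof, which simply states that the corollary follows directly from Thm.~\ref{thm: forward occupancy overapproximation}: you partition time, apply the theorem on each sub-interval's joint-angle box, and take unions over intervals and links. The extra bookkeeping you supply (the image of the trajectory commuting with unions of its domain, and the componentwise-to-product containment) is exactly the routine detail the paper elides.
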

\begin{proof}
    This follows directly from Thm.~\ref{thm: forward occupancy overapproximation}.
\end{proof}

The challenge with using Cor.~\ref{cor: FO overapprox for trajectory} is bounding $\config(t)$ in each partition interval.
For the head plan, we know $\dot{\config}(t)$ at every $t$ and assume a maximum acceleration $\overline{\ddot{\config}}_j$, so we use the standard kinematic equation:
$\config_{j,i}^- \geq \config_{j}(i\delta) - \delta\dot{\config}_{j}(i\delta) - \tfrac{1}{2}\overline{\ddot{\config}}_j\delta^2$,
and $\config_{j,i}^+$ similarly.
The bound $\overline{\ddot{\config}}_j$ is found per joint from demonstration data and augmented by a small $\varepsilon > 0$.

\textbf{Implementation Details.}
Per Cor.~\ref{cor: FO overapprox for trajectory}, we can overapproximate the forward occupancy of the robot over an entire trajectory by considering a union over all links and all time intervals.
Let $\forwardoccapprox(\config([0,t\final]))$ denote the right-hand side of \eqref{eq: FO overapprox for trajectory}, so at runtime we check $\forwardoccapprox(\config([0,t\final])) \cap \obstacles = \emptyset$.
We implement this with polynomial zonotopes \cite{kochdumper2020sparse}.
All functions used in the approximation are analytic and thus overbounded per \cite{kochdumper2020sparse} (see \cite[\S III-C.5]{michaux2023can}), and polynomial zonotopes are closed under intersections.
We represent the unions in \eqref{eq: FO overapprox for trajectory} by storing a polynomial zonotope for each time interval and arm link separately.

\section{Results} \label{sec:results}


In this section, we seek to (a) confirm that RAIL does indeed enforce hard constraints on IL policies, and (b) understand the impact of these constraints.
We hypothesize that RAIL allows learning policies that leverage the long-horizon reasoning capability inherited from a base imitation learning policy while being robust to unseen danger using the short-horizon model-based reasoning of the backup planner.
It is worth noting, that our experiment does not assume that all safety definitions are present in the offline training dataset.
In addition, we consider that our model-based safety filter is privileged with the safety specification at inference runtime. 

\textbf{Baselines.} We consider two powerful imitation learning policies: Diffusion Policy~\cite{chi2023diffusion} and Action Chunking with Transformers (ACT)~\cite{zhao2023learning}. These policies are trained using offline datasets.
To understand the impact of \textit{soft constraints}, we also formulate a classifier-guided Diffusion Policy. This involves: (a) annotating the offline dataset with safe and unsafe labels based on whether the resulting state is safe after executing the action or not (b) training a classifier network to predict safety probability (c) following classifier-guided reverse sampling~\cite{dhariwal2021diffusion} with pre-trained diffusion policy network. 
Finally, we also evaluate the performance of a reachability-based backup planner~\cite{kousik2019safe, holmes2020armtd} as a model-based baseline with hard constraints.

\textbf{Evaluation metrics.}
We evaluate the success rate as reaching the goal in the target environment (Succ \%).
We measure the number of transitions that led to a collision between the agent and the obstacles and report it as a percentage of all the transitions (Col \%). 
Most importantly, we report the percentage of successful episodes where none of the safety conditions were violated throughout the trajectory (SSucc \%).
Lastly, to understand how efficient each agent is at completing tasks, we report the average number of timesteps of the trajectories (Horizon).


\begin{figure}[t]
    \centering
        \includegraphics[width=\columnwidth]{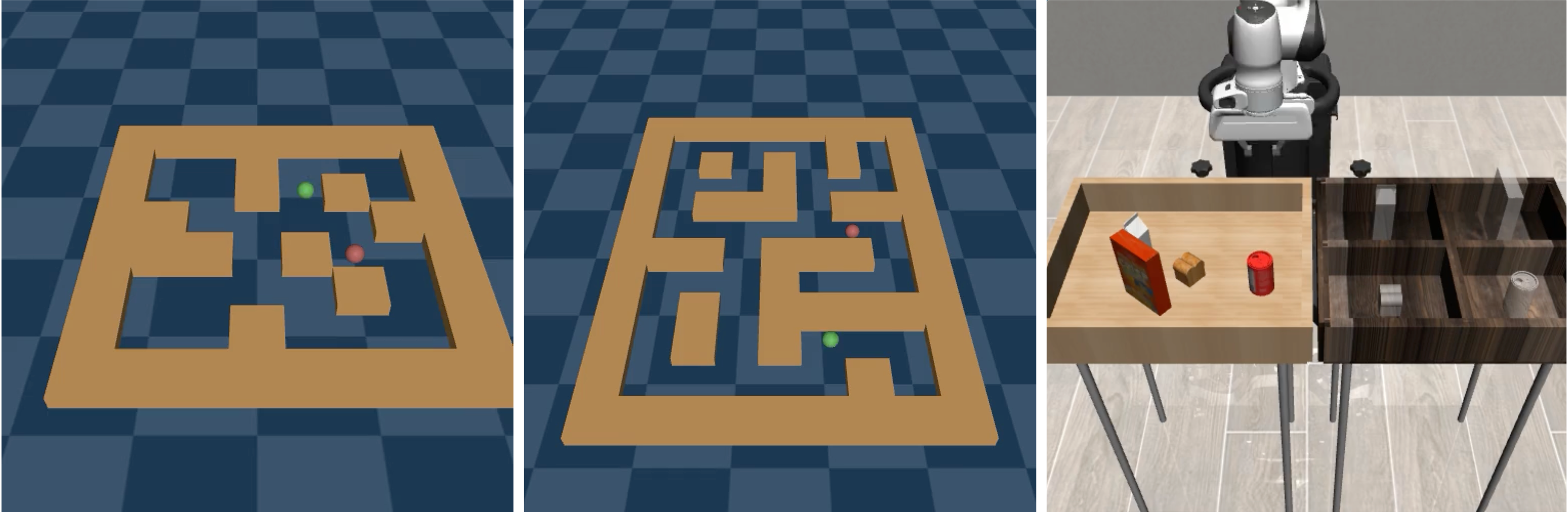}
    \caption{Evaluation tasks (left-to-right): \textbf{(a) Maze-Medium:} A point mass has to be taken to a goal in a maze without hitting the walls.
    \textbf{(b) Maze-Large:} Similar to Maze-Medium, but larger and more difficult to traverse.
    \textbf{(c) Can Pick-Place:} A can must be picked from a table and placed in a target bin.}
    \label{fig:all_envs}
\end{figure}

\subsection{Simulation: Safe Planning in Maze}


\begin{figure*}[ht]
    \centering
    \includegraphics[width=0.98\linewidth]{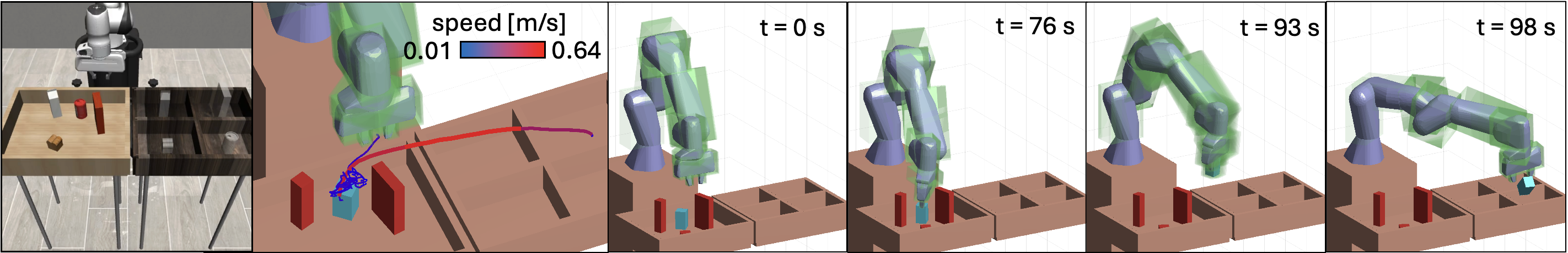}
    \caption{\textbf{(Leftmost)} Illustration of Can Pick-Place task: A can must be picked from a table and placed in a target bin. \textbf{(Rightward to Rightmost)} Timelapse of an episode involving navigating through a tight gap.
    A diffusion policy repeatedly proposed unsafe plans, until eventually RAIL validated a safe path to picking.
    The IL policy again proposed unsafe plans to move the picked object, until RAIL validated a safe path outwards to placing.}
    \label{fig: example trials}
\end{figure*}


\begin{table}[t]
\caption{Results on Maze Medium environment: We run each of the policies for 100 episodes and repeat for 3 seeds. We report the average values below.}
\label{tab:maze_medium_results}
\centering
\renewcommand{\arraystretch}{1.2}
\setlength{\tabcolsep}{4pt}
\begin{tabular}{|p{2cm}|c|c|c|c|}
\hline
\centering\textbf{Method} & \textbf{Succ. (\%)} & \textbf{Col. (\%)} & \textbf{SSucc. (\%)} & \textbf{Horizon} \\[0.5ex]
\hline
Diffusion Policy & \textbf{100.00} & 5.17 & 15.00 &  \textbf{278.39} \\
\hline
Backup Planner & 58.00 & \textbf{0.00} & 58.00 & 691.62 \\
\hline
Guided Diffusion & \textbf{100.00}& {4.64}&{18.00} & 284.84\\
\hline
\textbf{RAIL (ours)} & 97.00 & \textbf{0.00} & \textbf{97.00} & 309.77 \\
\hline
\end{tabular}
\end{table}

\begin{table}[t]
\caption{Results on Maze Large environment: We run each of the policies for 100 episodes and repeat for 3 seeds. We report the average values below.}
\label{tab:maze_large_results}
\centering
\renewcommand{\arraystretch}{1.2}
\setlength{\tabcolsep}{4pt}
\begin{tabular}{|p{2cm}|c|c|c|c|}
\hline
\centering\textbf{Method} & \textbf{Succ. (\%)} & \textbf{Col. (\%)} & \textbf{SSucc. (\%)} & \textbf{Horizon} \\[0.5ex]
\hline
Diffusion Policy & \textbf{100.00} & 5.66 & 15.00 & 470.48 \\
\hline
Backup Planner & 49.00 & \textbf{0.00} & 49.00 & 921.12\\
\hline
Guided Diffusion & \textbf{100.0}& {4.85}&{18.00} & \textbf{447.5}\\
\hline
\textbf{RAIL (Ours)} & 95.00 & \textbf{0.00} & \textbf{95.00} & 532.37 \\
\hline
\end{tabular}
\vspace*{-0.4cm}
\end{table}

\textbf{Setup.}
We evaluate our framework in the PointMaze environment~\cite{fu2020d4rl}, where the objective is for a double integrator mobile robot to navigate a maze to a target position.
We consider Medium and Large mazes as shown in~\autoref{fig:all_envs}.
The robot's state is 2-D position and velocity and its action is 2-D force.
Safety means avoiding collision with the maze walls and limiting velocity.

As our baseline, we train a diffusion policy on the offline D4RL dataset~\cite{fu2020d4rl} with an action trajectory prediction horizon of $\planninghorizon = 32$; at test time we execute a horizon of $\actionhorizon = 16$. 
The policy takes in the robot state as observations.
We adapt \cite{kousik2019safe} to design the backup planner.
We additionally compare our proposed method with a diffusion policy guided by a safety probability prediction classifier.

\begin{figure}[t]
    \centering
        \includegraphics[width=0.8\columnwidth]{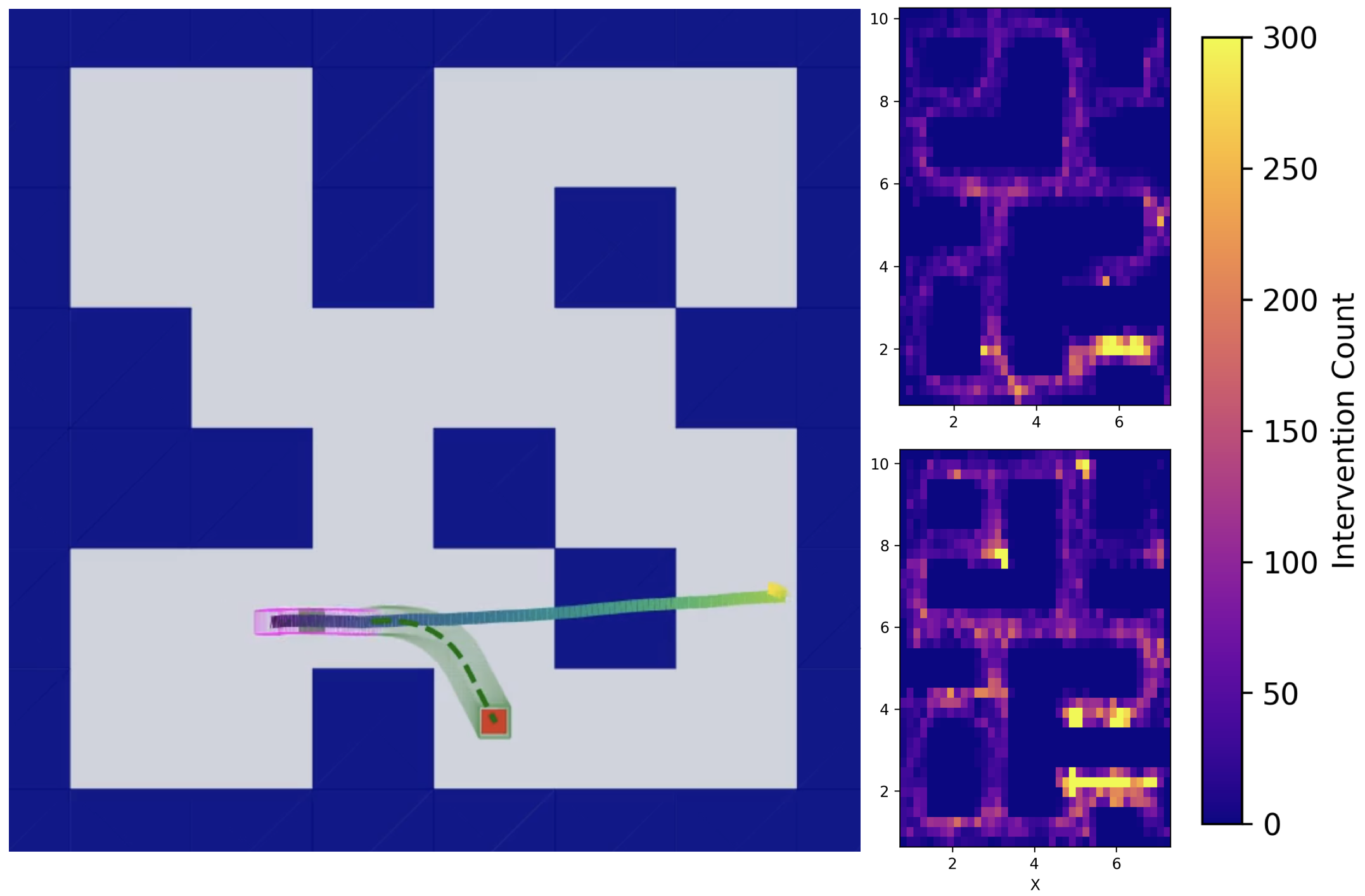}
    \caption{\textbf{(Left)} Visual representation of RAIL in the Maze-Medium task. The diffusion policy's proposed plan (gradient line) is validated by checking the reachable set of head plans (magenta) and the existence of a backup plan (green dotted line), with its reachable set (green tube). \textbf{(Right)} Heatmap comparing the number of safety interventions by RAIL in the Maze-Large task over 100 episodes with random initializations, comparing diffusion policies at Epoch 1900 \textbf{(Top)} and Epoch 50 \textbf{(Bottom}).}
    \label{fig:maze_internal}
    \vspace*{-0.5cm}
\end{figure}



\textbf{Results and Discussion.}
We see similar results in both the Medium (\autoref{tab:maze_medium_results}) and Large mazes (\autoref{tab:maze_large_results}).
It is evident that even if diffusion policy can solve the task, it collides with the walls.
In this experiment, we specifically try to make this more obvious by deliberately sabotaging our policy with a new safety specification at runtime (i.e., the offline dataset includes the trajectory colliding with obstacles).
On the other hand, the safe model-based backup planner~\cite{kousik2019safe} guarantees safety but shows a low success rate due to lack of long-horizon reasoning capability.
Guided diffusion implicitly informs and guides the action sampling from diffusion policy and hence leads to decrease the number of collisions, but cannot guarantee constraint satisfaction.
Our proposed method, RAIL leverages the conservative only backup planner when diffusion policy suggests unsafe actions, enabling it to achieve higher task success despite hard constraints.

We further analyze how IL policy performance affects integration with RAIL by evaluating trained models across different epochs.
The number of safety interventions notably decreases in later epochs \textit{despite no explicit safety training signal} (see \autoref{fig:maze_internal}).





\subsection{Simulation: Safe Planning in Pick-and-place}

\textbf{Setup.}
We now consider a more complex task derived from robomimic~\cite{robomimic2021} that involves a single-arm manipulator picking up a can and placing it in a target bin. 
Safety means avoiding collisions between the robot and the environment while obeying joint position and velocity constraints.
To enable picking, the collision constraint between the gripper and the can is ignored.
Additionally, the robot's behavior can be broken down into reaching, picking, and placing; so, we impose phase-specific constraints, especially in placing, where collisions between the can and the environment are unsafe.
We represent all objects as bounding boxes (see \autoref{fig: example trials}).

For all experiments, we adapt ARMTD~\cite{holmes2020armtd} to design a backup planner. 
We train Diffusion Policy~\cite{chi2023diffusion} and ACT~\cite{zhao2023learning} on 184 safe, successful demonstrations collected via robomimic.
The observation space is end effector pose, gripper position, and objects' poses.
We evaluate both policies with our proposed RAIL safety filter.
For all policies, we sample action sequences with a horizon of $\actionhorizon = (8, 16)$, $\planninghorizon = 32$.
For diffusion policy, we use a DDIM~\cite{song2020denoising} scheduler with 100 reverse denoising timesteps for sampling.



\textbf{Results and Discussion.}
We summarize results in~\autoref{tab:pickplace_results}.
We see that the diffusion policy and ACT without RAIL have high success.
This is because the task requires picking in clutter, so the policy learns to push the clutter aside, leading to a high collision rate; the SSucc numbers confirm this.
On the other hand, with RAIL, the diffusion policy is able to perform many more tasks both safely and successfully, indicating that many of the scenarios were possible to achieve without collisions.
We also see that RAIL incurs nearly twice the horizon of the unfiltered diffusion policy, indicating the policy and safety layer often oppose each other (see \autoref{fig: example trials}, \autoref{tab:pickplace_results}).

We make a key surprising finding by comparing the safe-and-success rate across seeds of diffusion policy and ACT, shown in \autoref{tab:pickplace_results_ssucc_delta}.
All of the policies considered had at least one seed (typically not the highest-performing) that significantly improved its SSucc when integrated with RAIL.
In other words, \textit{enforcing hard constraints can sometimes increase the performance of an IL policy}.
As suggested by \autoref{fig: example trials}, this is likely because (a) the safety filter forces the policy to behave more cautiously to carry out fine-grained tasks in clutter, where the constraints are always close to being active, and (b) lower-performing seeds may oppose the safety filter less often.



\begin{table}[h]
\caption{Results on Pick-Place environment: We run each of the policies for 50 episodes with 5 different seeds. We report the average values below.}
\label{tab:pickplace_results}
\centering
\renewcommand{\arraystretch}{1.2}
\setlength{\tabcolsep}{4pt}
\begin{tabular}{|p{2.3cm}|c|c|c|c|}
\hline
\centering\textbf{Method} & \textbf{Succ. (\%)} & \textbf{Col. (\%)} & \textbf{SSucc. (\%)} & \textbf{Horizon} \\[0.5ex]
\hline
Diffusion Policy & \textbf{78.00} & 27.20 & 58.00 & \textbf{723.08} \\
\hline
ACT & 74.00 & 35.39 & 62.00 & 1434.98\\
\hline
RAIL + DP & 68.00 & \textbf{0.00} & \textbf{68.00} & 1385.34 \\
\hline
RAIL + ACT & 58.00 & \textbf{0.00} & 58.00 & 1294.18 \\
\hline
\end{tabular}
\vspace*{-0.4cm}
\end{table}

\begin{table}[h]
\caption{Comparison of Increase in SSucc Rate per Policy Class. For each policy class, we run 8 policies of different hyperparameters and evaluate the increase of SSucc rate between IL and RAIL}
\label{tab:pickplace_results_ssucc_delta}
\centering
\renewcommand{\arraystretch}{1.2}
\setlength{\tabcolsep}{4pt}
\begin{tabular}{|p{2.3cm}|c|c|c|}
\hline
\centering\textbf{Method} & \textbf{Min. (\%)} & \textbf{Max. (\%)} & \textbf{Mean. (\%)} \\[0.5ex]
\hline
DP ($\actionhorizon = 8$)  & \textbf{4.00}   & \textbf{16.00} & \textbf{9.25} \\
\hline
DP ($\actionhorizon = 16$) & -14.00 & 10.00 & 2.25 \\
\hline
ACT ($\actionhorizon = 8$)  & -4.00   & 8.00 & 2.00 \\
\hline
ACT ($\actionhorizon = 16$) & -18.00 & 4.00  & -7.20 \\
\hline
\end{tabular}
\vspace*{-0.4cm}
\end{table}


\section{Real World Evaluation}


Finally, we demonstrate RAIL with a diffusion policy in the real world on a Franka Panda robot arm shown in~\autoref{fig:teaser_figure}.
This indicates our method can enforce hard constraints in real time and under sensing uncertainty.

\textbf{Setup.}
The goal is to pick and place the target object at a goal while avoiding delicate wine glasses and stacked blocks (represented for the safety filter as bounding boxes and detected with AprilTags~\cite{apriltag} via an Azure Kinect DK RGBD camera.)
Other safety specifications are as in simulation.
We train a diffusion policy on 75 teleoperated, safe demonstrations (collected with Gello~\cite{wu2023gello}).
The policy's observation space is the arm's joint and gripper positions and objects' poses.
We choose the desired joint angle as action representation, and joint impedance controller as low-level controller.
ARMTD~\cite{holmes2020armtd} serves as the backup planner.
Running the safety filter and rolling out the diffusion policy in parallel requires predicting future robot/environment state; for this, we directly use the desired joint angles from the policy with an empirically determined latency horizon to minimize the prediction error.
We assume all obstacles stay static.

\textbf{Results and Discussion.}
We found that the diffusion policy with RAIL solves the task safely as expected (see ~\autoref{fig:teaser_figure}), while the vanilla diffusion policy failed to comply with the safety specifications.
This indicates our safety filter runs in real-time ($0.42 \pm 0.05$ seconds per plan); however, note the main focus of this paper is not maximially efficient implementation, which we leave to future work.

\section{Conclusion}
This paper presented Reachability Assisted Imitation Learning (RAIL), a novel framework that combines deep imitation learning with a reachability-based safety filter to enforce hard constraints.
Simulated and real-world evaluations show that RAIL can indeed enforce hard constraints (0\% collision rate) on state-of-the-art IL policies in real-time.
Excitingly, in some cases RAIL \textit{increases} policy performance, indicating hard constraints are not always a negative tradeoff.

\textbf{Limitations.}
RAIL has two main limitations. 
First, it assumes access to predicted observations after action execution, which can be a bottleneck for scaling to policies conditioned on high-dimensional observations (e.g., images, languages) where making accurate predictions is challenging. 
Second, applying RAIL can result in longer task completion times due to potential conflicts between IL policy and the safety filters. 
A possible future direction is to incorporate safety filters directly into the IL policy planning process to generate inherently safer plans and minimize these conflicts.

\renewcommand{\bibfont}{\normalfont\footnotesize}
{\renewcommand{\markboth}[2]{}
\printbibliography}

\end{document}